\definecolor{iccvblue}{rgb}{0.21,0.49,0.74}
\definecolor{lightblue}{RGB}{220, 230, 255}
\title{FW-Merging: Scaling Model Merging with Frank-Wolfe Optimization\thanks{
Accepted at ICCV 2025
}}
\author{
Hao (Mark) Chen$^{1}$ \quad
Shell Xu Hu$^{2}$ \quad
Wayne Luk$^{1}$ \quad
Timothy Hospedales$^{2}$ \quad
Hongxiang Fan$^{1}$ 
\\
$^1$Imperial College London, UK
$^2$Samsung AI Center, Cambridge, UK
\\
{\small 
\texttt{hc1620@ic.ac.uk} \quad 
\texttt{shell.hu@samsung.com} \quad 
\texttt{w.luk@imperial.ac.uk} \quad \texttt{t.hospedales@samsung.com}
}
\\
{\small
\texttt{hongxiang.fan@imperial.ac.uk} 
\thanks{
Corresponding Authors: Shell Xu Hu and Hongxiang Fan}
}
\vspace{-6mm}
}
\newcommand{\fwta}{\texttt{$\text{FW}_{\text{hard}}$}\xspace }
\newcommand{\fwam}{\texttt{$\text{FW}_{\text{soft}}$}\xspace }
\newcommand{\fwm}{\texttt{FW-Merging}\xspace }
\DeclareMathOperator*{\argmin}{arg\,min}
\newtheorem{proposition}{Proposition}
\theoremstyle{definition}
\newtheorem{theorem}{Theorem}
\begin{document}

\newif\ifhongxiang
\hongxiangtrue

\newif\ifmark
\markfalse

\maketitle
\begin{abstract}
{
Model merging has emerged as a promising approach for multi-task learning (MTL),
offering a data-efficient alternative to 
conventional fine-tuning. However, with the rapid development of the open-source AI ecosystem and the increasing availability of fine-tuned foundation models, existing model merging methods face two key limitations: (i) They are primarily designed for in-house fine-tuned models, making them less adaptable to diverse model sources with partially \textbf{unknown} model and task information, (ii) They struggle to scale effectively when merging \textbf{numerous} model checkpoints.
To address these challenges, we formulate model merging as a constrained optimization problem and introduce a novel approach: \textbf{Frank-Wolfe Merging} (\fwm). 
Inspired by Frank-Wolfe optimization, our approach iteratively selects the most relevant model in the pool to minimize a linear approximation of the objective function and then executes a local merging similar to the Frank-Wolfe update. 
More importantly, \fwm serves as an orthogonal technique for existing merging methods, seamlessly integrating with them to further enhance accuracy performance.
Our experiments show that \fwm scales across diverse model sources, remaining stable with 16 irrelevant models and improving by \textbf{15.3\%} with 16 relevant models on 20 CV tasks, while maintaining constant memory overhead—unlike the linear overhead of data-informed merging methods.
Compared with the state-of-the-art approaches, \fwm surpasses the data-free merging method by \textbf{32.8\%} and outperforms the data-informed Adamerging by \textbf{8.39\%} when merging 20 ViT models. 
Our code is open-sourced at \href{https://github.com/hmarkc/FW-Merging}{here}.
}
\end{abstract}    
\section{Introduction}
\label{sec:intro}

\begin{figure*}[h]
    \centering
    \begin{subfigure}{0.33\textwidth}
        \centering
        \includegraphics[width=\linewidth]{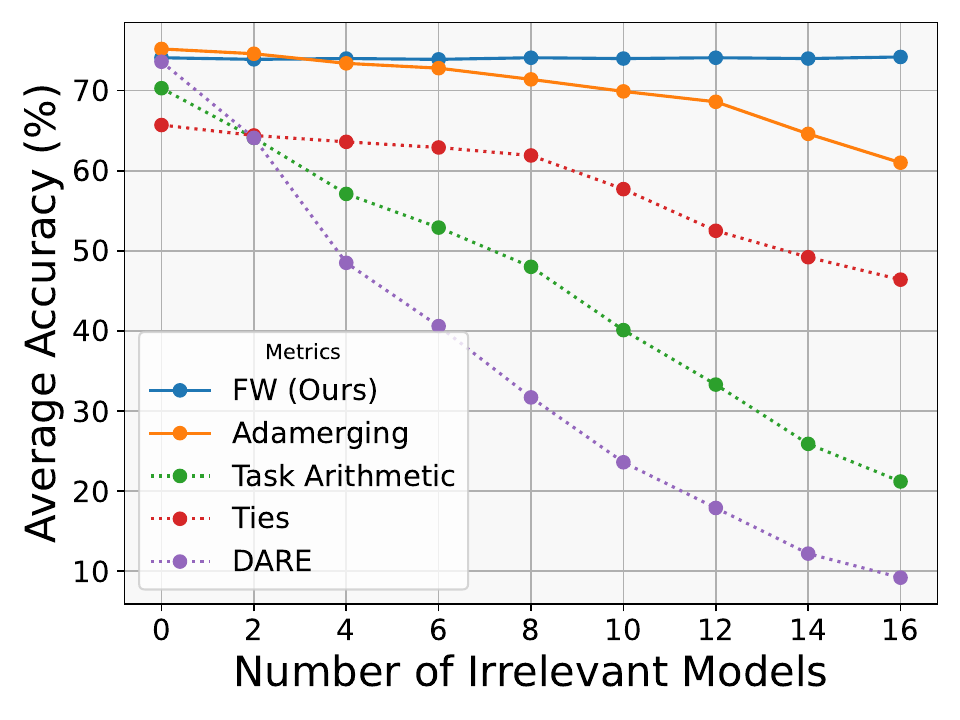}
        \caption{4 CV tasks with irrelevant model addition.}
        \label{fig:scaling1}
    \end{subfigure}
    \hfill
    \begin{subfigure}{0.33\textwidth}
        \centering
        \includegraphics[width=\linewidth]{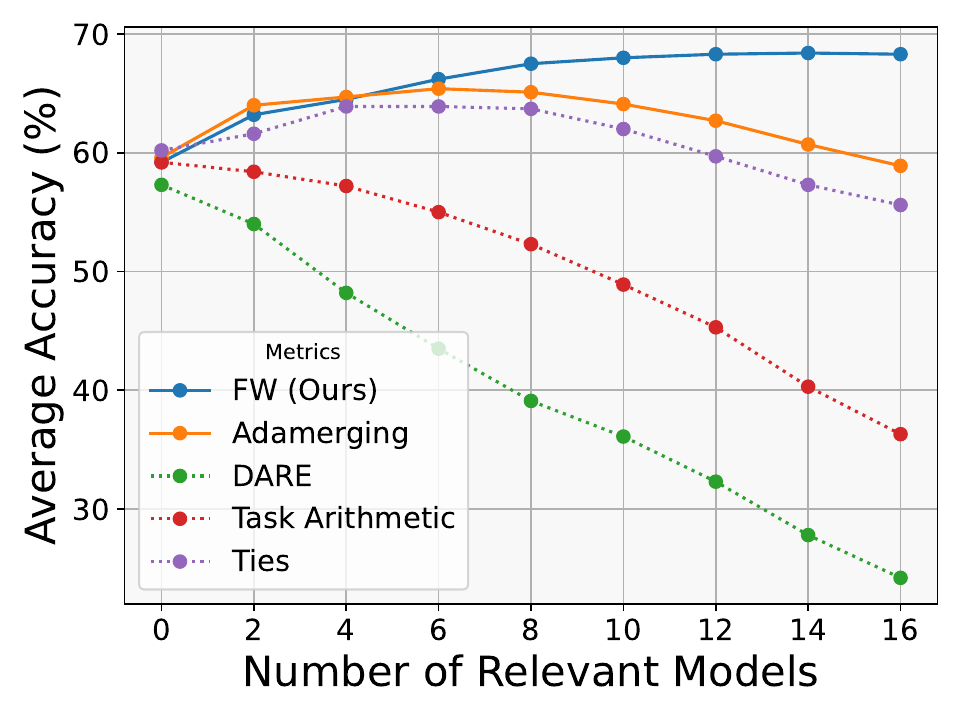}
        \caption{20 CV tasks with relevant model addition.}
        \label{fig:scaling2}
    \end{subfigure}
    \hfill
    \begin{subfigure}{0.33\textwidth}
        \centering
        \includegraphics[width=\linewidth]{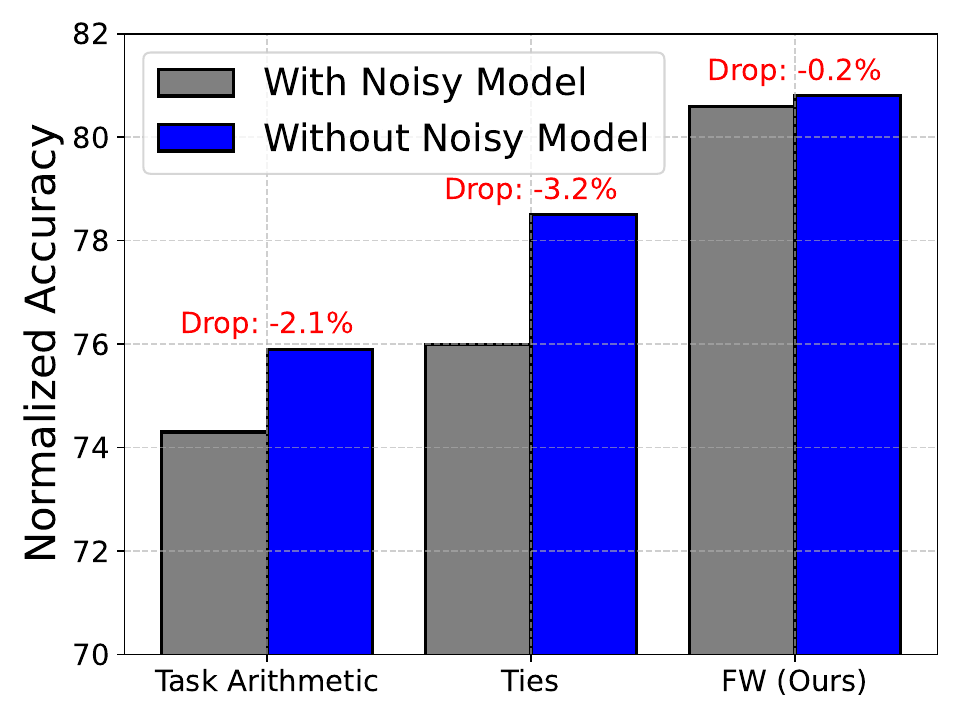}
        \caption{NLP tasks with noisy model addition.}
        \label{fig:scaling3}
    \end{subfigure}
    \caption{Performance scaling of \fwm across CV tasks. (a) demonstrate robustness to irrelevant models, while (b) show improved performance with relevant models. (c) analyzes performance degradation when incorporating a noisy model initialized from a different pre-trained checkpoint. Detailed results and experimental setup are discussed in Section~\ref{exp:scaling}.
    \vspace{-5mm}}
    \label{fig:scaling}
\end{figure*}

\ifmark
Large Language Models (LLMs) achieve excellent performance on downstream tasks after fine-tuning~\cite{li2021prefix, hu2021lora, lester2021power}. However, for multiple downstream tasks, individual fine-tuning leads to significant deployment cost. While an alternative is to fine-tune a single model with multi-task learning (MTL), it demands a large amount of high-quality data, which might only exist in the private domain~\cite{mammen2021federated}, and significant compute resources~\cite{rajbhandari2020zero}. To mitigate this issue, model merging has emerged as a promising technique for fusing fine-tuned models within the parameter space~\cite{ilharco2022editing, yadav2023ties, yang2023adamerging}. Model merging methods can be broadly categorized into two types: data-free methods~\cite{ilharco2022editing, yadav2023ties, izmailov2018averaging}, which do not require additional training data, and test-time adaptation methods~\cite{yang2023adamerging, yang2024representation}, which adjust merge coefficients dynamically during inference. When applied to high-quality fine-tuned checkpoints with a shared initialization, both have proven effective in preserving multi-task capabilities.

Thanks to the thriving open-source ecosystem, including platforms like Hugging Face, and the availability of powerful pre-trained models such as LLaMA~\cite{touvron2023llama} and DeepSeek~\cite{liu2024deepseek}, recent years have seen an exponential increase in the release of fine-tuned models~\cite{kukreja2024literature}. This rapid growth has created a large, diverse pool of models, providing a rich resource for advancing the field of model merging. By leveraging this growing collection of fine-tuned models, model merging has the potential to scale beyond small-scale experiments and transform into a systematic approach for building more robust, versatile LLMs. Instead of relying solely on expensive fine-tuning, large-scale model merging can enable efficient multi-task learning (MTL) with minimal additional cost, while utilizing existing models. Conceptually, the optimization of large-scale model merging can also be viewed as a novel fine-tuning paradigm where the inputs are model checkpoints rather than raw data. Since model checkpoints contain significantly more dense information, and the optimization space of merging coefficients is inherently smaller than that of fine-tuning a full model, large-scale model merging has the potential to offer a more efficient and targeted alternative to traditional MTL.

However, current model merging techniques are primarily designed for small-scale applications, focusing on in-house fine-tuned checkpoints where complete information is available. These methods adjust merging coefficients based on the known capabilities of the models to optimize performance. When scaling up to a large number of unknown models, these methods struggle and can result in significant performance degradation, ranging from 19.3\% to 64.4\% as shown in Figure~\ref{fig:scaling1}. This is mainly due to two factors: 1) the interference of parameters from a large number of models, and 2) the uncertainty of each model's capabilities, which results in both high-quality and poor models being treated equally during the merging process. For example, methods such as DOGE~\cite{wei2025modeling} and Concrete Merging~\cite{tang2023concrete} attempt to find a common subspace for all models. However, these methods fail when bad fine-tuned models are included, as they are not equipped to handle the noise introduced by less relevant models.

To effectively scale up model merging and leverage the vast collection of open-sourced models with unknown capabilities, a new model merging method is required. This new method must exhibit two fundamental scaling properties: \textit{1) as more irrelevant models are added to the merging pool, the performance should remain unaffected}, and \textit{2) as more relevant models are added to the merging pool, the performance should steadily increase, converging towards the optimal performance}. 
To this end, we revisit model merging and formulate it as a constrained optimization problem, where the objective function dictates the desirable behavior of the final merged model, and fine-tuned checkpoints form the constraint set. To solve this problem, we draw inspiration from the famous Frank-Wolfe optimization and adapt it to model merging, introducing \textbf{Frank-Wolfe Merging} (\fwm). 
\fwm is an iterative algorithm that comprises three principal stages in each iteration: (1) \textbf{Relevance Evaluation}: Instead of merging models arbitrarily, we obtain the linear approximation of the objective function using gradients of the current model, revealing the most beneficial direction for improvement. (2) \textbf{Model Selection}: The most relevant checkpoints are selected from the constraint set by minimizing the linear approximation, ensuring that each step incorporates task-specific knowledge with minimal interference. (3) \textbf{Knowledge Integration}: The selected checkpoint is integrated using an orthogonal merging method, striking a balance between adaptation and stability in the merged model.

We demonstrate the effectiveness of \fwm with a diverse pool of fine-tuned checkpoints across various language and vision tasks, compared to both data-free and data-driven model merging methods as well as traditional MTL. As shown in Figure~\ref{fig:scaling}, \fwm satisfy our two foundamental scaling properties: performance does not drop when 16 irrelevant models are added (compared to a 49\% drop in task-arithmetic) and improves by 8.7\% when 16 relevant models are included. \fwm outperforms state-of-the-art data-free merging method by 32.8\% and the data-driven method Adamerging by 8.39\% when merging 20 ViT models. On the language benchmarks, \fwm achieves 6.3\% improvement over the best model merging method across discriminative and generative tasks, while even surpassing the performance of traditional MTL using only 3.4\% of its data. Our results show position \fwm as an effective solution to scale model merging to the next level. 

Our contributions can be summarized as follows:

\begin{itemize}
    \item We frame model merging as a constrained optimization problem with an objective function that explicitly captures desirable behavior.
    \item We introduce Frank-Wolfe Merging, a novel iterative method that optimizes the merged model towards its optimal point, enabling better scalability with large sets of black-box open-source checkpoints.
    \item We evaluate our method on extensive benchmarks, demonstrating its effectiveness and scalability.
\end{itemize}

\else 

Multi-task learning (MTL)-based fine-tuning adapts a single pre-trained Large Language Model (LLM) for multiple downstream applications, reducing the deployment overhead of separately fine-tuning multiple models~\cite{yu2024unleashing}. However, it still demands a large amount of high-quality data, which might only exist in the private domain~\cite{mammen2021federated}, and significant compute resources~\cite{rajbhandari2020zero}. To mitigate these issues, model merging has emerged as a promising technique for fusing fine-tuned models within the parameter space~\cite{ilharco2022editing, yadav2023ties, yang2023adamerging}. 
Existing model merging methods can be broadly classified into two categories: data-free methods~\cite{ilharco2022editing, yadav2023ties, izmailov2018averaging}, and data-informed methods~\cite{yang2023adamerging, yang2024representation}, which optimize merge coefficients based on additional data.

While these approaches have proven effective, several key limitations hinder their scalability and broader adoption.
First, these methods adjust merging coefficients based on the known capabilities of the models on specific tasks to optimize performance, which is less robust when dealing with diverse model sources with unknown information\footnote{{This paper refers unknown information to: \textit{1)} when open-source models are partially assessed on limited benchmarks, leaving their performance on other tasks unknown and costly to evaluate, and \textit{2)} when models contain misleading information, polluting the merging process.}}. This is primarily caused by the inability to distinguish high-quality models from poorly fine-tuned ones in an unknown model setting.
Second, when scaling up these approaches to a large number of unknown models, these methods struggle and can result in significant performance degradation.
As shown in Figure~\ref{fig:scaling1}, our profiling study demonstrates a performance reduction ranging from 18.9\% to 64.4\%.
These limitations are \textbf{further amplified} by the fast-growing open-source AI ecosystem, where platforms such as Hugging Face have driven a surge in the release of powerful LLMs with many lacking complete information.
{Given that merging open-source models has repeatedly shown the potential to produce top-ranking LLMs on major model leaderboards~\cite{huggingface_openllm_2024}, developing scalable and robust merging techniques is essential to harness the growing number of open-source models, further enhancing performance and widening the adoption of model merging.}


 To effectively scale model merging and leverage the vast collection of open-sourced models with unknown capabilities, the new model merging method must exhibit two fundamental scaling properties: \textit{1) as more irrelevant models are added to the merging pool, the performance should remain unaffected}, and \textit{2) as more relevant models are added to the merging pool, the performance should steadily increase, converging towards the optimal performance}. 
To this end, we revisit model merging and formulate it as a constrained optimization problem, where the objective function dictates the desirable behavior of the final merged model, and fine-tuned checkpoints form the constraint set. 
Inspired by Frank-Wolfe optimization, we introduce \textbf{Frank-Wolfe Merging} (\fwm), an iterative algorithm designed to enhance merging efficiency while maintaining robustness at scale.
\fwm comprises three principal stages in each iteration: (1) \textbf{Relevance Evaluation}: Instead of merging models arbitrarily, we obtain the linear approximation of the objective function using gradients of the current model, revealing the most beneficial direction for improvement. (2) \textbf{Model Selection}: The most relevant checkpoints are selected from the constraint set by minimizing the linear approximation, ensuring that each step incorporates task-specific knowledge with minimal interference. (3) \textbf{Knowledge Integration}: The selected checkpoint is integrated using an orthogonal merging method, striking a balance between adaptation and stability in the merged model.

We demonstrate the effectiveness of \fwm with a diverse pool of fine-tuned checkpoints across various language and vision tasks, compared to both data-free and data-informed model merging methods as well as traditional MTL-based fine-tuning. As shown in Figure~\ref{fig:scaling}, \fwm satisfy our two fundamental scaling properties: accuracy performance does not drop when 16 irrelevant models are added (compared to a 49\% drop in task-arithmetic) and steadily improves by 15.3\% when 16 relevant models are included. 
{Additionally, \fwm requires only constant memory overhead, as it selects and merges a fixed number of models at a time. In contrast, methods that optimize merging coefficients~\cite{yang2023adamerging} or resolve parameter interference~\cite{yadav2023ties} must store all models in memory, leading to linear overhead.}
Moreover, \fwm exhibits greater robustness to noisy models lacking critical information, such as their initialization point. As shown in Figure~\ref{fig:scaling3}, \fwm experiences minimal performance degradation when a misinitialized model is introduced, whereas Ties suffers a performance drop of up to 3.2\%.
\fwm outperforms state-of-the-art data-free merging method by \textbf{32.8\%} and the data-informed method Adamerging by \textbf{8.39\%} when merging 20 ViT models. On the language benchmarks, \fwm achieves \textbf{6.3\%} improvement over the best model merging method across discriminative and generative tasks, while even surpassing the performance of traditional MTL using only \textbf{3.4\%} of its required data. Our results position \fwm as an effective solution to scale model merging to the next level. 

{
Our contributions can be summarized as follows:
\begin{itemize}
    \item Identify scalability and robustness issues in existing model merging techniques through experiments, highlighting the urgent need for large-scale model merging.  
    \item Formulate model merging as a constrained optimization problem with an objective function that explicitly captures the desired behavior of the final merged model.  
    \item Introduce Frank-Wolfe Merging, a novel iterative method that autonomously guides the merged model toward an optimized direction, even with large sets of black-box open-source checkpoints.  
    \item Evaluate our proposed approach on extensive benchmarks, demonstrating its effectiveness and scalability.  
\end{itemize}
}

\fi
\section{Related Work} \label{sec:formatting}

\paragraph{Efficient Multi-Task Learning.}
In traditional Multi-Task Learning (MTL), a single model is trained on a dataset containing multiple tasks to enable the model to acquire diverse capabilities~\cite{caruana1997multitask}. However, a significant challenge in traditional MTL is the issue of negative transfer~\cite{jiang2023forkmerge}. To mitigate this, architecture-based approaches have been developed, such as parameter sparsification~\cite{liu2019end, sun2020adashare} and shared structure modularization~\cite{ma2018modeling, ma2019snr}. On the optimization side, methods to resolve gradient conflicts~\cite{yu2020gradient, chen2020just} and domination of gradient or learning rate~\cite{chen2018gradnorm, liu2021towards} have been proposed. With the rise of Large Language Models (LLMs), MTL faces additional challenges, particularly the high computational costs. To address these challenges, strategies like parameter-efficient fine-tuning~\cite{li2021prefix, hu2021lora, lester2021power} and memory-efficient fine-tuning~\cite{dettmers2024qlora, li2023loftq, malladi2023fine} have been introduced to minimize both memory and computational resource usage. More recently, model merging has emerged as a promising approach to make MTL more compute- and data-efficient.

\paragraph{Model Merging.}
While pre-merging methods prepare favorable conditions for merging, during-merging techniques combine multiple neural networks into a single model while retaining or enhancing their capabilities~\cite{yang2408model}.
In this work, we focus on during-merging methods.
Early insights into neural network landscapes~\cite{goodfellow2014qualitatively} revealed that linear interpolation between models exposes useful loss surface properties, laying the foundation for weight averaging—a core merging technique. Simple averaging widens optima and improves generalization~\cite{izmailov2018averaging}, evolving into advanced methods like model soups~\cite{wortsman2022model} and heterogeneous model merging. Recent advances introduce more structured approaches, such as Fisher-Weighted Averaging~\cite{singh2021model}, which incorporates Fisher information to weight parameters more effectively, and Permutation Alignment methods like Git Re-Basin~\cite{ainsworth2022git}, which address weight permutation symmetries. Interference Resolution techniques, including TIES~\cite{liu2023resolving} and \texttt{DOGE}~\cite{wei2025modeling}, mitigate parameter conflicts either through explicit alignment or projective gradient descent. Task Arithmetic~\cite{mitchell2022editing} enables weight-space operations to combine task-specific behaviors in language models, while Diversity-Aware Merging, such as DARE~\cite{liu2024dare}, leverages model diversity to improve sparse-to-dense integration. In contrast to the data-free methods mentioned above, data-informed methods~\cite{yang2023adamerging, yang2024representation, tang2023concrete} optimize merging coefficients using additional data. Model merging is impactful for LLMs, enabling efficient knowledge integration without full retraining, facilitating distributed fine-tuning~\cite{wang2022lofi}, multi-task learning~\cite{prakash2023unival}, and cost-effective model adaptation.

\section{Method}

\begin{figure*}
    \centering
    \includegraphics[width=\textwidth]{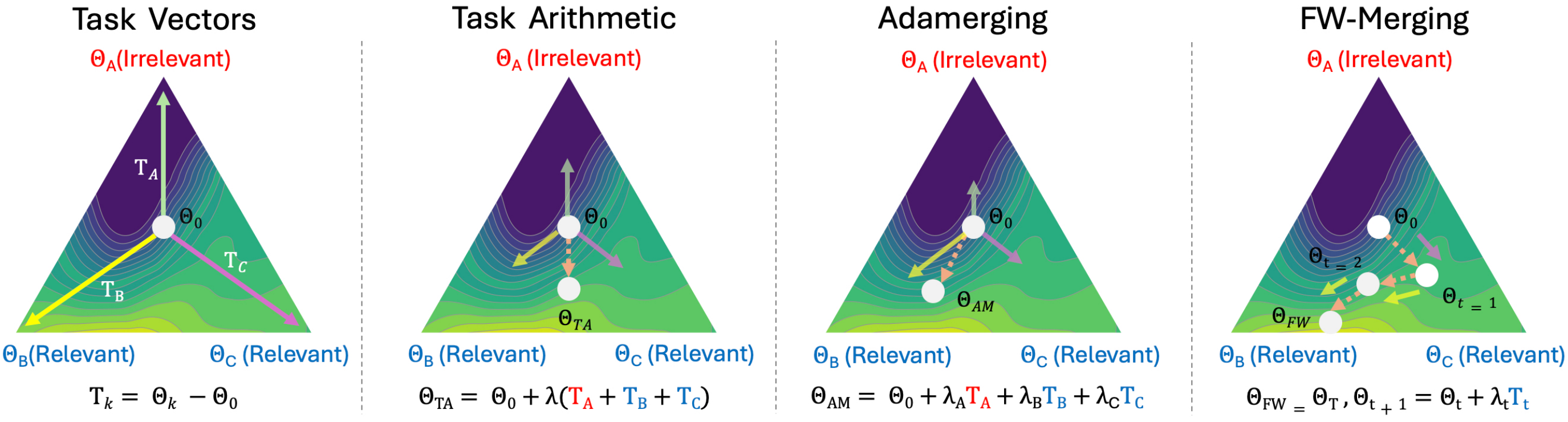}
    \caption{
    Illustration of model merging methods. $\Theta_A$ is an \textcolor{red}{irrelevant} model, while $\Theta_B$ and $\Theta_C$ are \textcolor{blue}{relevant} models. Darker regions indicate higher objective function loss. Task Arithmetic treats all task vectors equally, failing to move optimally. Adamerging assigns different coefficients, moving towards more desirable direction but suffer from slow convergence due to interference from $\Theta_aA$. \fwm iteratively selects the most relevant model to merge and adapts step sizes, efficiently reaching the optimum after $T$ iterations.\vspace{-5mm}}
    \label{fig:overview}
\end{figure*}

\subsection{Preliminary: Frank-Wolfe algorithm}
The Frank-Wolfe (FW) algorithm \cite{frank1956algorithm}, also known as the conditional gradient method, is an iterative optimization algorithm for constrained optimization problems of the form:
\begin{equation}
    \min_{x \in \mathcal{C}} f(x)
\end{equation}
where $f$ is a continuously differentiable function, and $\mathcal{C}$ is a compact convex set. The algorithm follows an elegant geometric intuition: at each iteration $t$, FW first identifies which vertex of $\mathcal{C}$ yields the steepest descent direction and then moves towards this vertex to decrease the value of the objective function. More specifically, FW algorithm:
\begin{enumerate}
    \item Constructs a linear subproblem of the original optimization (a.k.a. \emph{linear minimization oracle}) using first-order Taylor expansion at the point $x_t$: 
    \begin{align}
        {\rm LMO}\!\left(\mathcal{C}, x_t\right) := \argmin_{s \in \mathcal{C}} \langle s, \nabla f(x_t) \rangle
    \end{align}
    \item Finds the vertex $s_t$ of the feasible set $\mathcal{C}$ by picking $s_t \in {\rm LMO}(\mathcal{C}, x_t)$.
    \item Takes a careful step from the current point $x_t$ towards this direction $s_t - x_t$, maintaining feasibility through the convex combination: $x_{t+1} = (1-\gamma_t)x_t + \gamma_t s_t$.
\end{enumerate}

The step size $\gamma_t \in [0,1]$ can be chosen by line search 
\begin{align}
    \gamma_t = \argmin_{\gamma \in [0, 1]} f\Big((1-\gamma)x_t + \gamma s_t \Big),
\end{align}
which ensures a sufficient decrease in $f(.)$ at each FW step.


To determine when to stop, the \emph{FW gap} is used to measure the suboptimality in terms of the proximity to the best solution of LMO:
\begin{align}
    g_t := \max_{s \in \mathcal{C}} \langle -\nabla f(x_t), s_t - x_t \rangle,
\end{align}
which is non-negative by definition.

\subsection{Frank-Wolfe Model Merging}
We consider the problem of fine-tuning a pre-trained foundation model on new tasks. 
Given a pre-trained model $\theta_0$ and previously fine-tuned models $\{\theta_1^*, \theta_2^*, \cdots, \theta_n^*\}$, 
we aim to fine-tune the $(n+1)$-th model $\theta_{n+1}$ on new tasks as a convex combination of the previous models with optimal merging coefficients. 

To this end, we propose a Frank-Wolfe based model merging framework, which is described as follows. 
\begin{align}
    \label{eq:fw_orig}
    \min_\lambda \ell\Big( \sum_{i=1}^n \lambda_i \theta_i^* \Big) \quad \text{s.t.} \quad \sum_{i=1}^n \lambda_i = 1, \quad \lambda_i \geq 0,
\end{align}
where $\lambda = \{\lambda_1, \ldots, \lambda_n\}$ are the merging coefficients, and $\ell$ is a loss function formulated for a specific goal, such as aligning the merged and individual models~\cite{wei2025modeling} or satisfying a task objective~\cite{yang2023adamerging}. 

Potentially, a scaling issue of this formulation appears when the number of fine-tuned models $n$ is large, since we need to keep all the fine-tuned models in memory. 
To address this, we propose a reformulation of the problem eq.~\eqref{eq:fw_orig} as follows: 
\begin{align}
    \label{eq:fw_conv}
    \min_{\theta \in \mathcal{M}} \ell( \theta),
\end{align}
where $\mathcal{M} := {\rm conv}(\{\theta_i^*\}_{i=1}^n)$ is the convex hull of the set of previously fine-tuned models. 

\begin{proposition}
The optimization problems in equations \eqref{eq:fw_orig} and \eqref{eq:fw_conv} are equivalent.
\end{proposition}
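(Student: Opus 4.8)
The plan is to show that the two problems share the same feasible region after an appropriate reparameterization, so that their optimal values and minimizer sets coincide. The central object is the probability simplex $\Delta_n := \{\lambda \in \mathbb{R}^n : \sum_{i=1}^n \lambda_i = 1, \ \lambda_i \geq 0\}$, which is exactly the constraint set of eq.~\eqref{eq:fw_orig}. I would introduce the linear parameterization map $\Phi(\lambda) := \sum_{i=1}^n \lambda_i \theta_i^*$, so that the objective of eq.~\eqref{eq:fw_orig} becomes $\ell \circ \Phi$ restricted to $\Delta_n$. The entire argument then reduces to understanding the image $\Phi(\Delta_n)$.

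The key step is to identify $\Phi(\Delta_n)$ with $\mathcal{M}$. By the standard characterization of the convex hull of a finite point set, $\mathcal{M} = {\rm conv}(\{\theta_i^*\}_{i=1}^n)$ equals precisely the set of all convex combinations $\{\sum_{i=1}^n \lambda_i \theta_i^* : \lambda \in \Delta_n\} = \Phi(\Delta_n)$. If one instead takes the convex hull abstractly as the smallest convex set containing the $\theta_i^*$, this identity still holds: the set of convex combinations is itself convex and contains each $\theta_i^*$ (take $\lambda = e_i$), hence contains $\mathcal{M}$; conversely a short induction on the number of nonzero coefficients shows that every convex combination lies in any convex set containing the points, yielding the reverse inclusion. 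This renders $\Phi$ a surjection from $\Delta_n$ onto $\mathcal{M}$.

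With surjectivity in hand, equivalence follows from a two-sided inequality on the optimal values. Writing $v_{\text{orig}}$ and $v_{\text{conv}}$ for the optima of eq.~\eqref{eq:fw_orig} and eq.~\eqref{eq:fw_conv}, any feasible $\lambda \in \Delta_n$ gives $\Phi(\lambda) \in \mathcal{M}$, so $\ell(\Phi(\lambda)) \geq v_{\text{conv}}$ and hence $v_{\text{orig}} \geq v_{\text{conv}}$; conversely any $\theta \in \mathcal{M}$ admits some $\lambda \in \Delta_n$ with $\theta = \Phi(\lambda)$, giving $\ell(\theta) \geq v_{\text{orig}}$ and thus $v_{\text{conv}} \geq v_{\text{orig}}$. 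The two optima therefore agree, and $\Phi$ carries every minimizer of eq.~\eqref{eq:fw_orig} to a minimizer of eq.~\eqref{eq:fw_conv}, while any $\Phi$-preimage recovers a minimizer of eq.~\eqref{eq:fw_orig} from one of eq.~\eqref{eq:fw_conv}.

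I expect no genuine obstacle here, since the result is essentially a definitional unpacking; the only point demanding care is the convex-hull identity $\mathcal{M} = \Phi(\Delta_n)$, which needs the short induction above if one starts from the abstract smallest-convex-set definition rather than adopting the set of convex combinations as the definition outright. A secondary subtlety worth stating is that $\Phi$ need not be injective, since distinct coefficient vectors can map to the same merged $\theta$; the correspondence of minimizers is therefore surjective rather than bijective, but this does not affect the claimed equivalence of the two optimization problems.
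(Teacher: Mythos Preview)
Your proposal is correct and follows essentially the same approach as the paper: both arguments rest on the identification $\mathcal{M} = \{\sum_{i=1}^n \lambda_i \theta_i^* : \lambda \in \Delta_n\}$ via the definition of the convex hull, from which the equality of optimal values follows immediately. Your version is more thorough in spelling out the two-sided inequality and in noting the non-injectivity of the parameterization, but the underlying idea is identical to the paper's brief proof.
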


\begin{proof}
By definition of convex hull, any point $\theta \in \mathcal{M}$ can be written as a convex combination of the vertices $\{\theta_i^*\}_{i=1}^n$, 
i.e., $\theta = \sum_{i=1}^n \lambda_i \theta_i^*$ where $\lambda \in \Delta^n := \{ \lambda \in \mathbb{R}^n \mid \sum_{i=1}^n \lambda_i = 1, \lambda_i \geq 0 \}$. Therefore:
\begin{align*}
    \min_{\theta \in \mathcal{M}} \ell(\theta) &= \min_{\lambda \in \Delta^n} \ell\Big(\sum_{i=1}^n \lambda_i \theta_i^*\Big).
\end{align*}
This shows that any solution of one problem can be mapped to a solution of the other problem with the same objective value.
\end{proof}

Since the FW algorithm requires the initial solution to be an interior point of the constraint set, 
we add the initial solution $\theta_0$ to form a new constraint set $\mathcal{M} := {\rm conv}(\mathcal{M} \cup \{\theta_0\})$,
which we still denote as $\mathcal{M}$ for simple notations. 
A nice property of this reformulation is that the LMO can be simplified to 
\begin{align}
    {\rm LMO}\!\left(\{\theta_i^*\}_{i=1}^n, \theta_t\right) = \argmin_{s \in \rm \{ \theta_1^*, \ldots, \theta_n^* \}} \langle \nabla \ell(\theta_t), s \rangle
\end{align}

This is because for linear programming problems over convex sets, the optimal solution is always attained at the vertices of the constraint set. 
Algorithm \ref{alg:FW-MM} details the steps, and Figure~\ref{fig:overview} provides an overview.

\begin{algorithm}
	\caption{Frank-Wolfe Merging:}
	\label{alg:FW-MM}
    \KwIn{Initial solution $\theta_0$; Fine-tuned checkpoints $\{\theta_i^*\}^n_{i=1}$; FW budget $T$.}
    \KwOut{Merged model $\theta^*_{merged}$.}
    \begin{algorithmic}[1]
    \STATE \textbf{if} $\theta_0 \notin \mathcal{M}$ \textbf{then} $\mathcal{M} := \mathcal{M} \cup \{\theta_0\}$
	\FOR{$t=0\dots T$}
		\STATE Let $s_t := {\rm LMO}\!\left( \theta_t \right)$ %
		   and $d_t := s_t - \theta_t$ 
            \IF{$g_t := \langle -\nabla \ell(\theta_t), d_t \rangle \leq \epsilon$}  
            \STATE \textbf{return} $\theta^*_{\text{merged}} \gets \theta_t$ \\
            \ENDIF
		  \STATE Line-search: $\gamma_t \in \displaystyle\argmin_{\gamma \in [0,1]} \textstyle \ell\left(\theta_t + \gamma d_t)\right)$ 
		  \STATE Update: $\theta_{t+1} := {\rm MergeFn}(\theta_t, s_t, \gamma_t)$  
	\ENDFOR
    \RETURN $\theta^*_{Merged} \leftarrow \theta_T$
	\end{algorithmic}
\end{algorithm}

\subsection{Design choices of the algorithm} ~\label{method:design}
The above algorithm illustrates the key ingredients of Frank-Wolfe merging: ${\rm LMO}$, stopping criterion $g_t$, line search routine, and the merging function. We discuss in this section the design choices of these components.  

\paragraph{Merging function}
The main deviation from the classical FW algorithm is the reinterpretation of the FW update $x_{t+1} = (1-\gamma_t)x_t + \gamma_t s_t$ as a local merging between $\theta_t$ and $s_t$. We denote by ${\rm MergeFn}$ the customizable merging function as long as the merged model stays in the convex hull $\mathcal{M}$.
The most straightforward merging function is the convex combination:
\begin{align}
    {\rm MergeFn}(\theta_t, s_t, \gamma_t) := (1-\gamma_t) \theta_t + \gamma_t s_t,
\end{align}
which corresponds to the Task-Arithmetic \cite{mitchell2022editing} method. 
The step size $\gamma_t$ makes sure the merged model stays in $\mathcal{M}$.


It is natural to ask whether other existing model merging methods, such as TIES-Merging \cite{liu2023resolving} and DARE \cite{liu2024dare} could also be used as ${\rm MergeFn}$. The problem with these sophisticated merging methods is that the merged model might leave the constraint set, and thus violate the assumption of maintaining feasibility required by the classical FW theory. We verified in practice that these less rigorous merging functions might achieve better performance in certain cases but they generally cause more stability issues. Therefore, we do not consider these merging functions from the current comparison.

\paragraph{Hard FW v.s. Soft FW}
In the case of deep learning, the optimization problem is non-convex, additional efforts are needed to better characterize the loss landscape and prevent the LMO from being dominated by one or a few fine-tuned models, which occurs because the linear approximation of $\ell(\theta)$ is an inaccurate sketch of the original objective function. 
Instead of relying on the \emph{argmin} of linear subproblem, we fetch the top-$k$ vertices of LMO, $\{ \tilde{s}_j \}_{j=1}^k$. A more subtle top-$k$ operation can be performed in a task-wise fashion if the original objective function involves multi-tasks.

Given the top-$k$ vertices, we now go back to eq.~\eqref{eq:fw_orig} to obtain the optimal merging coefficients $\{ \lambda_j^* \}_{j=1}^k$. Note that this inner optimization\footnote{For inner optimization, we use projected gradient descent with a projection of $\{ \lambda_j \}_{j=1}^k$ onto the simplex after each gradient update.} is a reduced version of original eq.~\eqref{eq:fw_orig} because hosting $k$ models in memory would not be a problem. 
We also remove the line search step as this gives a new merging function of the form
\begin{align}
    \label{eq:soft_lmo}
    {\rm MergeFn}(\theta_t, \{ \tilde{s}_j \}_{j=1}^k, \{ \lambda_j^* \}_{j=1}^k)
    := \theta_t + \sum_{j=1}^k \lambda_j^* (\tilde{s}_j - \theta_t).
\end{align}
We call this oracle \emph{soft} LMO in comparison to the argmin version which we call \emph{hard} LMO. 

\begin{proposition}
The merging function maintains feasibility, i.e., the merged model stays in the convex hull $\mathcal{M}$.
\end{proposition}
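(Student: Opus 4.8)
The plan is to show that the soft merging function in eq.~\eqref{eq:soft_lmo} produces a point expressible as a convex combination of $\theta_t$ and the selected vertices $\{\tilde{s}_j\}_{j=1}^k$, and then invoke convexity of $\mathcal{M}$ together with an inductive argument that $\theta_t \in \mathcal{M}$ at every iteration. Since $\mathcal{M}$ is defined as a convex hull, it is by definition convex and contains all the vertices $\theta_i^*$ (and $\theta_0$); hence any convex combination of points already in $\mathcal{M}$ remains in $\mathcal{M}$. So the crux is twofold: verifying that the soft update is genuinely a convex combination, and setting up the induction correctly so that $\theta_t$ is itself a legitimate member of $\mathcal{M}$.

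First I would rewrite the update by rearranging eq.~\eqref{eq:soft_lmo}. Starting from
\begin{align*}
    \theta_{t+1} = \theta_t + \sum_{j=1}^k \lambda_j^* (\tilde{s}_j - \theta_t),
\end{align*}
I would collect the $\theta_t$ terms to obtain
\begin{align*}
    \theta_{t+1} = \Big(1 - \sum_{j=1}^k \lambda_j^*\Big)\theta_t + \sum_{j=1}^k \lambda_j^* \tilde{s}_j.
\end{align*}
The coefficients are $(1 - \sum_j \lambda_j^*)$ on $\theta_t$ and $\lambda_j^*$ on each $\tilde{s}_j$. These sum to $1$ by construction. To conclude they are a valid convex combination, I would note that the inner optimization constrains $\{\lambda_j^*\}$ to the simplex $\Delta^k$ (enforced by the simplex projection mentioned in the footnote), so each $\lambda_j^* \geq 0$ and $\sum_j \lambda_j^* \le 1$; the latter guarantees the residual weight $1 - \sum_j \lambda_j^*$ on $\theta_t$ is also nonnegative. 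Thus $\theta_{t+1}$ is a convex combination of $\theta_t$ and the $\tilde{s}_j$.

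Next I would handle the membership of the constituents. Each $\tilde{s}_j$ is a vertex of $\mathcal{M}$ (a selected fine-tuned checkpoint $\theta_i^*$ or possibly $\theta_0$), hence trivially in $\mathcal{M}$. For $\theta_t$, I would argue by induction on $t$: the base case $\theta_0 \in \mathcal{M}$ holds because line~1 of Algorithm~\ref{alg:FW-MM} explicitly adds $\theta_0$ to $\mathcal{M}$; the inductive step is exactly the convex-combination statement just established, since $\mathcal{M}$ is convex and $\theta_{t+1}$ lies in the convex hull of points already shown to be in $\mathcal{M}$. Closing the induction gives $\theta_t \in \mathcal{M}$ for all $t$, and in particular feasibility is maintained throughout. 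The same argument applies verbatim to the hard LMO / plain convex-combination update as a special case with $k=1$.

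The main obstacle, and the only place where care is genuinely required, is justifying $\sum_{j=1}^k \lambda_j^* \le 1$ so that the weight on $\theta_t$ stays nonnegative. This is not automatic from the naive simplex constraint $\sum_j \lambda_j = 1$ if one interprets eq.~\eqref{eq:fw_orig} literally as a combination of only the $\tilde{s}_j$; one must reconcile the reparameterization in terms of displacement vectors $(\tilde{s}_j - \theta_t)$ with the simplex used in the inner projection. I would make this precise by stating clearly that the feasible region for $\{\lambda_j^*\}$ in the inner problem is $\Delta^k$, so that $\sum_j \lambda_j^* = 1$ (or $\le 1$ if a slack/anchor coordinate for $\theta_t$ is included), and then the residual coefficient on $\theta_t$ is a well-defined simplex weight. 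Everything else — summing to one, nonnegativity of the $\lambda_j^*$, closure of $\mathcal{M}$ under convex combinations — is routine and follows immediately from the definition of the convex hull.
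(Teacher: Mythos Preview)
Your approach is essentially the same as the paper's: rewrite the update as $\theta_{t+1} = (1-\sum_j \lambda_j^*)\theta_t + \sum_j \lambda_j^* \tilde{s}_j$, appeal to the simplex constraint on $\{\lambda_j^*\}$ from the inner projection, and conclude by convexity of $\mathcal{M}$. Your explicit induction on $\theta_t \in \mathcal{M}$ and your discussion of the $\sum_j \lambda_j^* \le 1$ versus $=1$ subtlety are extra care the paper omits (it simply asserts $\sum_j \lambda_j^* = 1$, in which case the coefficient on $\theta_t$ vanishes and the induction is not even needed), but the underlying argument is identical.
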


\begin{proof}
    We can rewrite the merging function as:
    \begin{align*}
        \theta_{t+1} = \big( 1 - \sum_{j=1}^k \lambda_j^* \big) \cdot \theta_t + \sum_{j=1}^k \lambda_j^* \cdot \tilde{s}_j.
    \end{align*}
    Since $\theta_t \in \mathcal{M}$ and $\tilde{s}_j \in \mathcal{M}$ for all $j=1,\dots,k$, 
    and $\{\lambda_j^*\}_{j=1}^k$ are obtained through projection onto the simplex 
    (i.e., $\sum_{j=1}^k \lambda_j^* = 1$ and $\lambda_j^* \geq 0$), 
    we have $\theta_{t+1} \in \mathcal{M}$.
    This follows from the convexity of $\mathcal{M}$: a convex combination of points in a convex set remains in the set.
\end{proof}

\begin{theorem}[Convergence Rate of Soft FW]
    Consider $\ell(\theta)$ be $L$-smooth over $\mathcal{M}$, which has two constants:
    ${\rm diam} := \max_{\theta_1, \theta_2 \in \mathcal{M}} \|\theta_1 - \theta_2\|$ be the diameter of $\mathcal{M}$,
    and ${\rm subopt} := \ell(\theta_0) - \min_{\theta \in \mathcal{M}} \ell(\theta)$ be the global suboptimality.
    Consider the soft FW algorithm which introduces the following changes to Algorithm~\ref{alg:FW-MM}:
    \begin{enumerate}
        \item $\{ \tilde{s}_j \}_{j=1}^k$ is the top-$k$ vertices of LMO.
        \item $\{\lambda_j^*\}_{j=1}^k = \arg\min_{\lambda \in \Delta^k} \ell(\theta_t + \sum_{j=1}^k \lambda_j (\tilde{s}_j - \theta_t))$.
        \item $\theta_{t+1} = \theta_t + \sum_{j=1}^k \lambda_j^* (\tilde{s}_j - \theta_t)$.
    \end{enumerate}
    We have:
    \begin{align*}
        \min_{t = 0, \ldots, T} g_t \leq \frac{\rm subopt}{T} + \frac{L \cdot {\rm diam}^2}{2}.
    \end{align*}
\end{theorem}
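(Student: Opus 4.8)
The plan is to run the classical Frank-Wolfe descent argument, with one twist that handles the soft LMO: rather than analyzing a carefully tuned step size, I would exploit the fact that the soft update minimizes $\ell$ over the entire simplex $\Delta^k$ and therefore dominates any single feasible candidate. The candidate I compare against is the \emph{full} hard-FW step to the top-1 vertex $\tilde{s}_1 = s_t$, which corresponds to the vertex $\lambda = e_1 \in \Delta^k$ of the inner problem. This reduces the soft analysis back to a one-vertex smoothness estimate plus an optimality inequality.

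First I would invoke $L$-smoothness of $\ell$ over $\mathcal{M}$ at $\theta_t$ along the direction of the full step to $s_t$:
\begin{align*}
    \ell(s_t) \leq \ell(\theta_t) + \langle \nabla \ell(\theta_t), s_t - \theta_t \rangle + \frac{L}{2}\|s_t - \theta_t\|^2.
\end{align*}
By the definition of the LMO and the FW gap we have $\langle \nabla \ell(\theta_t), s_t - \theta_t \rangle = -g_t$, and since $s_t, \theta_t \in \mathcal{M}$ we have $\|s_t - \theta_t\| \leq {\rm diam}$. Hence this simplifies to $\ell(s_t) \leq \ell(\theta_t) - g_t + \tfrac{L}{2}{\rm diam}^2$. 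Next I would use the optimality of the soft update: because $\theta_{t+1}$ minimizes $\ell\big(\sum_j \lambda_j \tilde{s}_j\big)$ over $\lambda \in \Delta^k$ and $\lambda = e_1$ is feasible and realizes the point $\tilde{s}_1 = s_t$, it follows that $\ell(\theta_{t+1}) \leq \ell(s_t)$. Chaining the two inequalities yields the per-iteration descent bound $g_t \leq \ell(\theta_t) - \ell(\theta_{t+1}) + \tfrac{L}{2}{\rm diam}^2$.

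Finally I would telescope this over $t = 0, \ldots, T$. The successive differences collapse to $\ell(\theta_0) - \ell(\theta_{T+1}) \leq {\rm subopt}$, giving $\sum_{t} g_t \leq {\rm subopt} + (T+1)\tfrac{L}{2}{\rm diam}^2$; dividing by the number of terms and bounding the minimum by the average then produces $\min_{t} g_t \leq \tfrac{\rm subopt}{T} + \tfrac{L\,{\rm diam}^2}{2}$, as claimed (the harmless off-by-one in the index range only tightens the left side).

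The subtle point, and the place where the soft-LMO structure must be used with care, is that there is \emph{no} vanishing step size here: the simplex constraint $\sum_j \lambda_j = 1$ forces a full convex combination of the selected vertices, so $\theta_t$ formally drops out of the update and the quadratic smoothness term enters effectively with $\gamma = 1$ and cannot be shrunk. This is precisely why the bound retains the non-vanishing additive constant $\tfrac{L\,{\rm diam}^2}{2}$ rather than driving the gap to zero, making the statement one about the minimum gap being eventually trapped within an $O(L\,{\rm diam}^2)$ neighborhood rather than exact convergence. I expect the only thing to verify carefully is that $e_1$ is a genuinely admissible candidate for the inner optimization so that the optimality inequality $\ell(\theta_{t+1}) \leq \ell(s_t)$ is legitimate; everything else is the standard smoothness-and-telescoping machinery.
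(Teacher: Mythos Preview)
Your argument is correct and follows the same overall backbone as the paper's proof---smoothness, diameter bound, telescoping, and the ``minimum is at most the average'' step---but you handle the soft-LMO step differently. The paper introduces an auxiliary top-$k$ gap $g_t^k$, applies $L$-smoothness directly to the actual iterate $\theta_{t+1}$, and uses the identity $\langle \nabla \ell(\theta_t), \theta_{t+1} - \theta_t \rangle = -g_t^k$ before invoking $g_t^k \geq g_t$. You instead apply smoothness to the single candidate $s_t$ and then use the optimality of the inner minimization over $\Delta^k$ (via the feasible vertex $\lambda = e_1$) to conclude $\ell(\theta_{t+1}) \leq \ell(s_t)$. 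Your route is more elementary: it avoids defining $g_t^k$ altogether and sidesteps the delicate point that the $\lambda_j^*$ chosen by the soft step minimize $\ell$ rather than the linear surrogate, so the inner-product identity the paper writes is not immediate. The paper's version, on the other hand, makes the role of the top-$k$ selection more explicit by packaging it into $g_t^k$. Both arrive at the same per-iteration descent inequality and the same telescoped bound.
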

    
\begin{proof}
    We first define $g_t^k$ as the top-$k$ FW gap of the soft FW algorithm:
    \begin{align*}
        g_t^k := \max_{\lambda \in \Delta^k} \max_{s_1, \ldots, s_k \in \mathcal{M}} 
        \sum_{j=1}^k \lambda_j \langle \nabla \ell(\theta_t), \theta_t - s_j \rangle.
    \end{align*}
    Comparing to the full FW gap
    \begin{align*}
        g_t = \max_{s \in \mathcal{M}} \langle \nabla \ell(\theta_t), \theta_t - s \rangle,
    \end{align*}
    we have:
    \begin{align*}
        g_t^k \geq g_t
    \end{align*}
    because the top-$k$ FW gap subsumes the original FW gap by setting $\lambda_1 = 1$ and $\lambda_j = 0$ for $j=2,\dots,k$.
    Intuitively, selecting multiple descent directions and optimizing their combination always gives at least as much descent as the single best direction.
    From the Lipschitz continuity of $\ell(\theta)$, we have:
    \begin{align*}
        \ell(\theta_{t+1}) \leq \ell(\theta_t) + \langle \nabla \ell(\theta_t), \theta_{t+1} - \theta_t \rangle + \frac{L}{2} \|\theta_{t+1} - \theta_t\|^2.
    \end{align*}
    Using the update rule $\theta_{t+1} = \theta_t + \sum_{j=1}^k \lambda_j^* (\tilde{s}_j - \theta_t)$, we have:
    \begin{align*}
        \langle \nabla \ell(\theta_t), \theta_{t+1} - \theta_t \rangle = -g_t^k.
    \end{align*}
    Therefore, 
    \begin{align*}
        \ell(\theta_{t+1}) \leq \ell(\theta_t) - g_t^k + \frac{L}{2} \|\theta_{t+1} - \theta_t\|^2.
    \end{align*}
    Since $\theta_{t+1}$ is a convex combination of $\theta_t$ and $\tilde{s}_j$, we have:
    \begin{align*}
        \| \theta_{t+1} - \theta_t \|^2 \leq {\rm diam}^2.
    \end{align*}
    Hence, 
    \begin{align*}
        \ell(\theta_{t+1}) \leq \ell(\theta_t) - g_t^k + \frac{L}{2} {\rm diam}^2.
    \end{align*}
    Summing over $t=0,\dots,T-1$, we have:
    \begin{align*}
        \sum_{t=0}^{T-1} g_t^k &\leq \ell(\theta_0) - \ell(\theta_T) + \frac{LT}{2} {\rm diam}^2. \\
        &\leq {\rm subopt} + \frac{LT}{2} {\rm diam}^2.
    \end{align*}
    Therefore,
    \begin{align*}
        \min_{t=0,\dots,T} g_t^k \leq \frac{1}{T} \sum_{t=0}^{T-1} g_t^k \leq \frac{{\rm subopt}}{T} + \frac{L}{2} {\rm diam}^2.
    \end{align*}
    The same result holds for $g_t$ by the definition of $g_t^k$.
\end{proof}

This convergence proof for non-convex objective functions is based on the proof given by \cite{lacoste2016convergence}. Due to the soft LMO, we obtain a better convergence rate $O(\frac{1}{T})$ over the vanilla rate $O(\frac{1}{\sqrt{T}})$ with a price to solve a relatively more expensive iteration to obtain the optimal coefficients. This might result in a longer total time, but it is worthy of a solution to the problem of model merging. 

\paragraph{Task-wise LMO v.s. layer-wise LMO}
The naive implementation of \fwm would vectorize the whole model weights $\theta$ and then solve LMO. We call this \emph{task-wise} LMO. Since different layers contribute differently to model performance~\cite{yosinski2014transferable}, a \emph{layer-wise} LMO may yield better model merging. To incorporate this, the constraint set is redefined as a Cartesian product of convex hulls for each layer: $\mathcal{M} := \mathcal{M}_1 \times \cdots \times \mathcal{M}_L$, 
where $L$ is the number of layers and $\mathcal{M}_l := \text{conv}\left(\{\theta_i^{*, l}\}_{i=1}^n\right)$. The LMO is then conducted layer-wise:  
\begin{align}    
{\rm LMO}(\{\theta_i^{*, l}\}_{i=1}^n, D, \theta_t^l) = \argmin_{s^l \in \{\theta_1^{*,l}, \dots, \theta_n^{*,l}\}} \langle \nabla \ell(\theta^t)^l, s^l \rangle.
\end{align}  
This version can be viewed as a block-coordinate Frank-Wolfe algorithm \cite{lacoste2013block}, which is applied when the problem has a natural decomposition into blocks.

\begin{table*}[h]
    \centering
    \caption{Performance on 4 Discriminative Tasks when merging 8 RoBERTa and 3 Generative Tasks when merging 16 LLaMA2-7B.
    }
    \scalebox{0.95}{
        \begin{tabular}{lccc}
    \toprule
    \textbf{Method} & \textbf{4 Disc. Tasks (8 Models)} & \textbf{3 Gen. Tasks (16 Models)} & \textbf{Avg. Normalized Score} \\
    \midrule
    \rowcolor[gray]{0.9} Pretrained & 49.6 & 77.1 & 63.4 \\
    \rowcolor[gray]{0.9} Traditional MTL & 73.1 & 81.2 & 77.2 \\
    \midrule
    Task Arithmetic (w/ DARE) & 77.3 & 16.8 & 47.1 \\
    Ties-Merging (w/ DARE) & 75.6 & 46.6 & 61.1 \\
    Task Arithmetic & 80.8 & 75.9 & 78.4 \\
    Ties-Merging & 64.3 & 78.5 & 71.4 \\
    \midrule
    \rowcolor{lightblue} \fwta (Ours) & \textbf{85.4} & \textbf{81.1} & \textbf{83.1} \\
    \bottomrule
\end{tabular}

    }
    \label{tab:main_nlp}
\end{table*}

\begin{table}[h]
    \centering
    \caption{Performance on 4 CV Tasks when merging 20 ViT-B/32.
    }
    \scalebox{0.85}{
        \begin{tabular}{lcccccc}
    \toprule
    \textbf{Method} & \textbf{SUN397} & \textbf{Cars} & \textbf{GTSRB} & \textbf{DTD} & \textbf{Avg.} \\
    \midrule
    \rowcolor[gray]{0.9} Pretrained & 62.3 & 59.7 & 32.6 & 43.8 & 49.6  \\
    \midrule
    DARE (TIES) & 5.9 & 2.3 & 16.7 & 11.8 & 9.2  \\
    Task Arithmetic & 20.4 & 12.2 & 29.8 & 22.3 & 21.2 \\
    Ties-Merging & 51.0 & 36.2 & 57.7 & 40.6 & 46.4 \\
    Weight Averaging & 64.2 & 59.6 & 43.1 & 46.5 & 53.4  \\
    Fisher Merging & 64.6 & 63.8 & 43.0 & 46.9 & 54.6 \\
    RegMean & 65.5 & 62.2 & 59.7 & 53.9 & 60.3 \\
    \midrule
    LW Concrete AM & 62.5 & 60.3 & 88.0 & 54.7 & 66.3  \\
    Adamerging & 66.4 & 70.1 & 95.1 & 64.0 & 73.9  \\
    Surgery & 69.7 & 71.8 & 96.6 & 73.4 & 77.9 \\
    \rowcolor{lightblue} \fwta (Ours) & 66.5 & 69.9 & 95.1 & 64.5 & 74.0 \\
    \rowcolor{lightblue} \fwam (Ours) & \textbf{72.9} & \textbf{74.8}  & \textbf{96.8} & \textbf{76.0} & \textbf{80.1} \\
    \bottomrule
\end{tabular}

    }
    \label{tab:main_cv}
\end{table}

\section{Experiments}

\subsection{Experiment Setup}

\paragraph{Benchmarks.} 
Our primary objective is to evaluate the effectiveness of our method in scenarios where the number of models greatly exceeds the number of evaluation tasks, and each model’s capabilities are unknown in advance.  

\begin{itemize}  
    \item \textbf{Vision Tasks:} Following the setting of TALL~\cite{wang2024localizing}, we use \textbf{20} ViT-B/32 models, each fine-tuned on a different vision task. The number of models to be merged is intentionally set to be significantly larger than the number of evaluation tasks, allowing us to assess the scalability of model merging methods. The evaluation benchmarks consist of four tasks: SUN397~\cite{xiao2016sun}, Stanford Cars~\cite{cars}, GTSRB~\cite{gtsrb}, and DTD~\cite{dtd}.
    \item \textbf{Language Discriminative Tasks:} We prepare \textbf{8} RoBERTa checkpoints~\cite{liu2019roberta} fine-tuned on eight tasks from the GLUE benchmark, following the practice in~\cite{lu2025twin}. The merged model is then evaluated on four tasks from the GLUE benchmark~\cite{wang2018glue}: MNLI, QNLI, QQP, and RTE.  
    \item \textbf{Language Generative Tasks:} We collect \textbf{16} LLaMA2-7B models~\cite{touvron2023llama} fine-tuned with LoRA~\cite{hu2022lora} on various tasks from Hugging Face. These models have unknown and uncontrolled capabilities, making them equivalently black-box models. Our goal is to evaluate the robustness of model merging methods in this challenging setting. The evaluation benchmarks include CNN/DM summarization~\cite{cnn}, PubMedQA~\cite{jin2019pubmedqa}, and HumanEval~\cite{chen2021codex}.
\end{itemize}  

Detailed information can be found in \textbf{Appendix}~\ref{app:benchmark}.

\paragraph{Metrics.}  
For vision tasks, we report the classification accuracy. Following~\cite{lu2025twin}, we report the average normalized score for language tasks to account for differences in task-specific score ranges to account for variations in task-specific score ranges. The normalized score is computed as
$
\text{Score}_{\text{normalized}} = \frac{1}{T} \sum_{t=1}^{T} \frac{\text{Score} \left( f(\boldsymbol{\theta}^*) \right)}{\text{Score} \left( f_t(\boldsymbol{\theta}_t) \right)}  
$.

\paragraph{Baselines.}  
We compare \fwm with both data-informed and data-free model merging methods.  
For data-informed model merging, we compare \fwm against Adamerging~\cite{yang2023adamerging}, Surgery~\cite{yang2024representation}, and Concrete Merging~\cite{tang2023concrete}. To ensure a fair comparison, these methods are trained only on the same tasks as \fwm.  
For data-free model merging, we compare \fwm with Fisher Merging, Weight Averaging, RegMean Merging, Task Arithmetic~\cite{mitchell2022editing}, Ties-Merging~\cite{yadav2023ties}, and DARE Merging~\cite{liu2024dare} across both language and vision tasks.  
Additionally, we fine-tune one model on the discriminative language benchmark and another on the generative language benchmark to serve as additional baselines.  
Further details can be found in Appendix~\ref{app:baselines}.  

\paragraph{Implementations.}
We implement two \fwm variants: \fwta, which uses hard FW with layer-wise LMO, and \fwam, which employs soft FW with task-wise LMO (Section~\ref{method:design}).
For \fwam, layer-wise coefficients are optimized via gradient descent on the training dataset to solve eq.~\ref{eq:fw_orig}, differing from Adamerging~\cite{yang2023adamerging} by minimizing cross-entropy loss on training data rather than entropy on test samples. 
On language benchmarks, the training dataset consists of 100 samples per task, and \fwta runs for 10 iterations, initialized with Task Arithmetic’s merged model. For vision tasks, it runs for 3 iterations. \fwam runs for 15 iterations on vision benchmarks, initialized with the pre-trained model. Training datasets consist of 100 samples from MNLI, QNLI, QQP, and RTE~\cite{wang2018glue} for discriminative tasks; CNN/DM~\cite{cnn}, CodeAlpaca-20k~\cite{codealpaca}, and PubMedQA~\cite{jin2019pubmedqa} for generative tasks; and SUN397~\cite{xiao2016sun}, Stanford Cars~\cite{cars}, GTSRB~\cite{gtsrb}, and DTD~\cite{dtd} for vision tasks.
Further details can be found in Appendix~\ref{app:impl}.

\subsection{Comparison with Model Merging Methods}

We evaluate \fwm against both data-informed and data-free model merging approaches across language and vision benchmarks. Table~\ref{tab:main_nlp} reports the results for language tasks, including both discriminative and generative settings, while Table~\ref{tab:main_cv} presents results on vision benchmarks. 

\paragraph{Language Tasks.}  
\fwta achieves the highest average normalized score across language benchmarks, consistently surpassing prior model merging baselines. Specifically, \fwta improves upon Task Arithmetic by 4.6 points, Ties-Merging by 11.7, and DARE (Ties) by 9.8 on discriminative tasks. Table~\ref{tab:cost_finetuned} shows that \fwta also outperforms data-informed Adamerging by 5.9 points.   
For generative tasks, \fwta outperforms Task Arithmetic by 5.2 points, Ties-Merging by 2.6, and DARE (Ties) by 34.5. 
Interestingly, while Task Arithmetic outperforms Ties-Merging on discriminative tasks by a margin of 16.5 points, it lags behind by 2.6 points on the more challenging generative tasks. This discrepancy likely arises from increased interference among task vectors as more checkpoints are merged. Unlike Ties-Merging, which explicitly resolves merging conflicts, Task Arithmetic lacks a reconciliation mechanism, making it more susceptible to such interference.  
In contrast, \fwta consistently outperforms both Ties-Merging and Task Arithmetic by selectively merging only the most relevant model parameters in each iteration. This targeted approach effectively mitigates interference, leading to more stable and robust performance across both discriminative and generative tasks.

\paragraph{Vision Tasks.}  
\fwam achieves state-of-the-art performance across multiple vision benchmarks, surpassing data-informed methods like Adamerging and Surgery. As shown in Table~\ref{tab:main_cv}, \fwta surpasses Adamerging, Concrete Merging, and all data-free merging methods in overall performance. Additionally, \fwam attains the highest accuracy (80.1\%), outperforming Adamerging by 6.2\% and Surgery by 2.2\%. Unlike Surgery, which requires additional task-specific parameters and multiple forward passes per inference, our approach efficiently adapts to diverse visual tasks without increasing storage or inference complexity. 

In general, data-free merging methods show significantly lower performance compared to data-informed approaches while merging a large number of models, when the models' capabilities do not precisely align with the evaluation tasks. This limitation arises because data-free methods treat all models equally, merging them without considering their unique capabilities, which amplifies interference between models. In contrast, data-informed merging methods achieve superior performance by optimizing merging coefficients on datasets as they allow for explicit control over desirable capabilities. \fwm, in particular, enhances scalability via hard model selection based on the linear approximation minimization. 

\begin{table*}
    \centering
    \caption{Merging Methods' Performance vs. Number of Models when Adding Relevant vs. Irrelevant Models.}
    \scalebox{0.99}{
        \begin{tabular}{c|ccccc|ccccc}
    \toprule
    \multirow{3}{*}{\textbf{\#Models}} & \multicolumn{5}{c|}{\textbf{4 CV Tasks}} & \multicolumn{5}{c}{\textbf{20 CV Tasks}} \\
     & \multicolumn{5}{c|}{When "Irrelevant" Models Added} & \multicolumn{5}{c}{When "Relevant" Models Added} \\
    \cmidrule(lr){2-6} \cmidrule(lr){7-11}
    & \textbf{DARE} & \textbf{Task} & \textbf{Ties} & \textbf{AM} & \textbf{\fwam} & \textbf{DARE} & \textbf{Task} & \textbf{Ties} & \textbf{AM} & \textbf{\fwam} \\
    \midrule
    4  & 73.6  & 70.3  & 65.7  & \textbf{75.2}  & \cellcolor{lightblue}74.1  & 57.3  & 59.2  & \textbf{60.2}  & 59.6  & \cellcolor{lightblue}59.2  \\
    6  & 64.1  & 64.1  & 64.4  & \textbf{74.6}  & \cellcolor{lightblue}73.9  & 54.0  & 58.4  & 61.6  & \textbf{64.0}  & \cellcolor{lightblue}63.2  \\
    8  & 48.5  & 57.1  & 63.6  & 73.4  & \cellcolor{lightblue}\textbf{74.0}  & 48.2  & 57.2  & 63.9  & \textbf{64.7}  & \cellcolor{lightblue}64.5  \\
    10  & 40.6  & 52.9  & 62.9  & 72.8  & \cellcolor{lightblue}\textbf{73.9}  & 43.5  & 55.0  & 63.9  & 65.4  & \cellcolor{lightblue}\textbf{66.2}  \\
    12  & 31.7  & 47.9  & 61.9  & 71.4  & \cellcolor{lightblue}\textbf{74.1}  & 39.1  & 52.3  & 63.7  & 65.1  & \cellcolor{lightblue}\textbf{67.5}  \\
    14 & 23.6  & 40.1  & 57.7  & 69.9  & \cellcolor{lightblue}\textbf{74.0}  & 36.1  & 48.9  & 62.0  & 64.1  & \cellcolor{lightblue}\textbf{68.0}  \\
    16 & 17.9  & 33.3  & 52.5  & 68.6  & \cellcolor{lightblue}\textbf{74.1}  & 32.3  & 45.3  & 59.7  & 62.7  & \cellcolor{lightblue}\textbf{68.3}  \\
    18 & 12.2  & 25.9  & 49.2  & 64.6  & \cellcolor{lightblue}\textbf{74.0}  & 27.8  & 40.3  & 57.3  & 60.7  & \cellcolor{lightblue}\textbf{68.4}  \\
    20 &  9.2  & 21.2  & 46.4  & 61.0  & \cellcolor{lightblue}\textbf{74.2}  & 24.2  & 36.3  & 55.6  & 58.9  & \cellcolor{lightblue}\textbf{68.3}  \\
    \bottomrule
\end{tabular}
    }
    \label{tab:scaling}
\end{table*}

\begin{figure*}
    \centering
    \includegraphics[width=\textwidth]{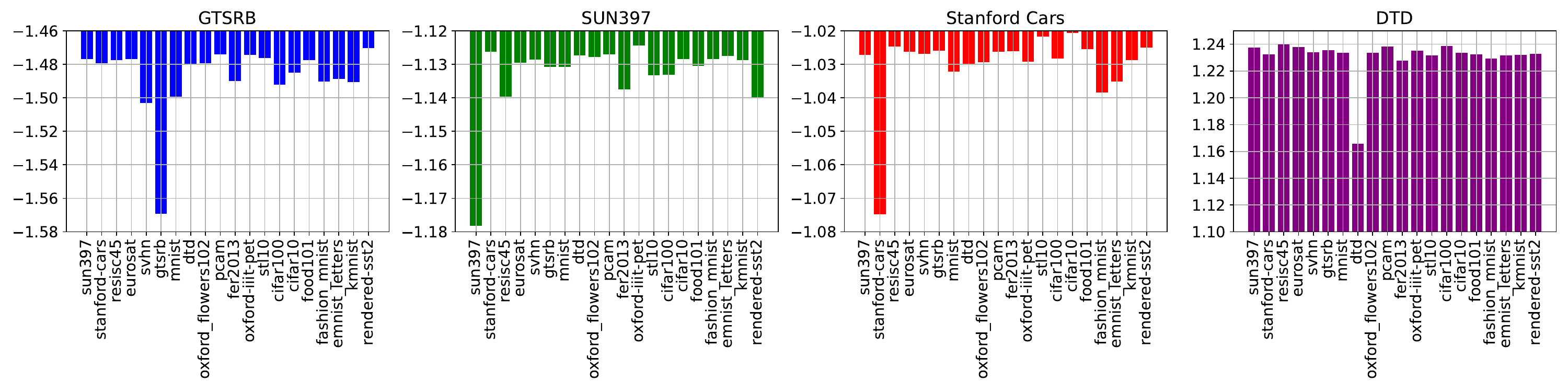}
    \caption{Linear Approximation of the Objective Function of Model Checkpoints Across Different Tasks in a Frank-Wolfe Iteration. The x-axis represents the checkpoints, and each graph shows the linear approximation result for each task.
    }
    \label{fig:inner_products}
\end{figure*}

\subsection{Comparison with Traditional MTL}

\begin{table}
    \centering
    \caption{Costs and Perf. of methods on NLP discriminative tasks.
    }
    \scalebox{0.8}{
        \begin{tabular}{lccc}
    \toprule
    \textbf{Method}  & \textbf{Data Samples/Task} & \textbf{Time Cost} & \textbf{Perf.} \\
    \midrule
    Traditional MTL & 2.9K & 4.2h & 73.1 \\
    Data-free Merging & 0 & 0 & 80.8 \\
    Data-informed Merging & 1.6K & 2min & 79.5 \\
    \rowcolor{lightblue} FW Merging & 100 & 2min & 85.4 \\
    \bottomrule
\end{tabular}

    }
    \label{tab:cost_finetuned}
\end{table}

We compare \fwm with models fine-tuned using traditional MTL on discriminative and generative tasks. In each case, one single model is fine-tuned across all tasks, with performance and computational cost reported in Table~\ref{tab:main_nlp} and Table~\ref{tab:cost_finetuned}. 
Traditional MTL achieves an average score of 77.2, lower than that of \fwm (83.1). On discriminative tasks, MTL scores 73.1, trailing \fwm (85.4). For generative tasks, MTL scores 81.2, while \fwm closely matches it at 80.8, suggesting that \fwm's performance matches that of traditional MTL.

As shown in Table~\ref{tab:cost_finetuned}, \fwm demonstrates a substantial advantage in efficiency. Traditional MTL requires fine-tuning on 2.9K samples per task and takes 4.2 hours of training time, which is computationally intensive. In contrast, \fwm only requires 100 training samples per task and completes the merging process in just 2 minutes. This huge reduction in computational cost underscores the effectiveness of \fwm compared to traditional MTL. Moreover, \fwm has a key advantage over traditional MTL: while MTL requires a large volume of high-quality data for optimal performance, \fwm needs only a small set of data because: \textit{1)} it optimizes merging coefficients based on models' characteristics, which simplifies the optimization space, and \textit{2)} it uses model weights as inputs, which are much more information-dense representations than data, enabling more efficient objective learning.

\fwm is a post-training technique that does not require access to original training data, making it ideal for privacy-sensitive or data-scarce scenarios. Overall, the results suggest that \fwm is a scalable, efficient alternative to traditional MTL, providing comparable performance at a reduced computational cost.

\subsection{Scaling to More Models and Tasks}~\label{exp:scaling}

We investigate the performance scaling of different merging methods with the number of models, as shown in Figure~\ref{fig:scaling1}, Figure~\ref{fig:scaling2}, and Table~\ref{tab:scaling}. In large-scale model merging, models from open-source platforms vary in quality. To simulate this, we use 20 ViT-B/32 models fine-tuned on tasks that are either included in the evaluation benchmark or not. A model is \textit{irrelevant} if its fine-tuning dataset does not match the training split of the evaluation task, and \textit{relevant} if it matches. To ensure fair comparison, the total number of training iterations run by \fwam is the same as that of Adamreging.

As shown in Table~\ref{tab:scaling}, adding \textit{irrelevant} models sharply reduces the performance of data-free methods: DARE by 64.4\%, Task Arithmetic by 49.1\%, and Ties by 19.1\%, likely due to task interference and equal treatment of all models. Data-informed methods degrade less, with Adamerging dropping by 14.2\%. In contrast, \fwam remains highly stable, fluctuating only from 73.9\% to 74.1\% as more models are added.
In Figure~\ref{fig:inner_products}, we examine the linear approximation of different checkpoints for a specific task and find that the model fine-tuned on the task consistently yields the most negative linear approximation. This indicates that in the Frank-Wolfe update, the most relevant checkpoint is chosen as the direction for merging, allowing \fwm to iteratively improve the merged model in the optimized direction within the constraint set. The inner product between gradients and model parameters serves as a reliable indicator of model relevance, with minized computational cost, further demonstrating \fwm’s scalability even in the presence of irrelevant models.

Adding \textit{relevant} models should ideally improve performance, but data-free methods still degrade as shown in Table~\ref{tab:scaling}: DARE by 33.1\%, Task Arithmetic by 22.9\%, and Ties by 4.6\%, with Ties performing best by mitigating parameter conflicts. Data-informed methods like Adamerging fluctuate between 58.9\% and 64.7\% as merging complexity increases, whereas \fwam steadily improves from 59.2\% to 68.3\% by iteratively selecting the most relevant models, facilitating smoother convergence.
These results underscore \fwm’s effectiveness as a scalable solution for large-scale model merging.

\subsection{Ablation Studies}


\begin{table}
    \centering
    \caption{Ablation on design variants of \fwm.
    }
    \scalebox{1}{
        \begin{tabular}{cccc}
    \toprule
    \textbf{Coefficient $\lambda$} & \textbf{Method} & \textbf{LMO} & \textbf{Score} \\
    \midrule
    \multicolumn{4}{c}{\textit{Vision Tasks}} \\
    Optimized & \fwam & Layer-wise & 79.7 \\
    Optimized & \fwam & Task-wise & 80.1 \\
    Unoptimized & \fwam & Layer-wise & 69.8 \\
    Unoptimized & \fwam & Task-wise & 70.3 \\
    - & \fwta & Layer-wise & 74.0 \\
    - & \fwta & Task-wise & 73.7 \\
    \midrule
    \multicolumn{4}{c}{\textit{NLP Discriminative Tasks}} \\
    - & \fwta & Layer-wise & 85.4 \\
    - & \fwta & Task-wise & 78.2 \\
    \bottomrule
\end{tabular}

    }
    \label{tab:ablation}
\end{table}

\paragraph{Design variants.} \label{exp:layerwise}
Table~\ref{tab:ablation} compares the design variants of \fwm (Section~\ref{method:design}). Task-wise LMO aligns better with \fwam, improving performance slightly by 0.5 points over layer-wise LMO, while layer-wise LMO is more effective for \fwta, especially on language tasks, yielding a 7.2-point gain. This is likely because \fwam optimizes layer-wise coefficients during merging, reducing the impact of layer-wise selection.

\fwam excels when merging a large number of models, outperforming \fwta by up to 6.7 points. Its ability to select multiple optimal directions per iteration allows it to navigate the parameter space efficiently.

Optimizing merging coefficients $\lambda$ further improves performance by up to 9.9 points, underscoring the importance of weighting model parameters based on their relevance.

\begin{figure}
    \centering
    \begin{subfigure}{0.48\linewidth}
        \centering
        \includegraphics[width=\linewidth]{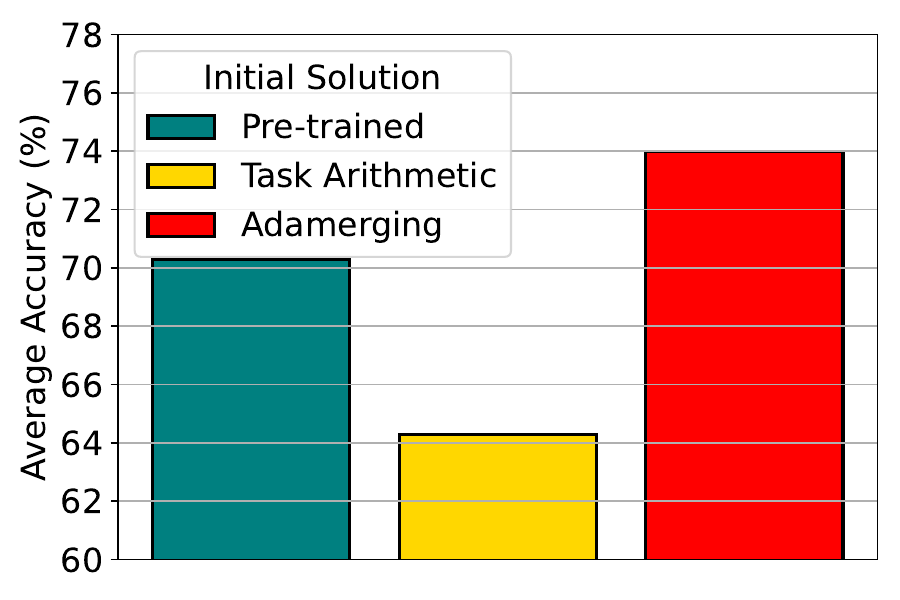}
        \caption{Initial soluiton
        }
        \label{fig:init_point}
    \end{subfigure}
    \hfill
    \begin{subfigure}{0.48\linewidth}
        \centering
        \includegraphics[width=\linewidth]{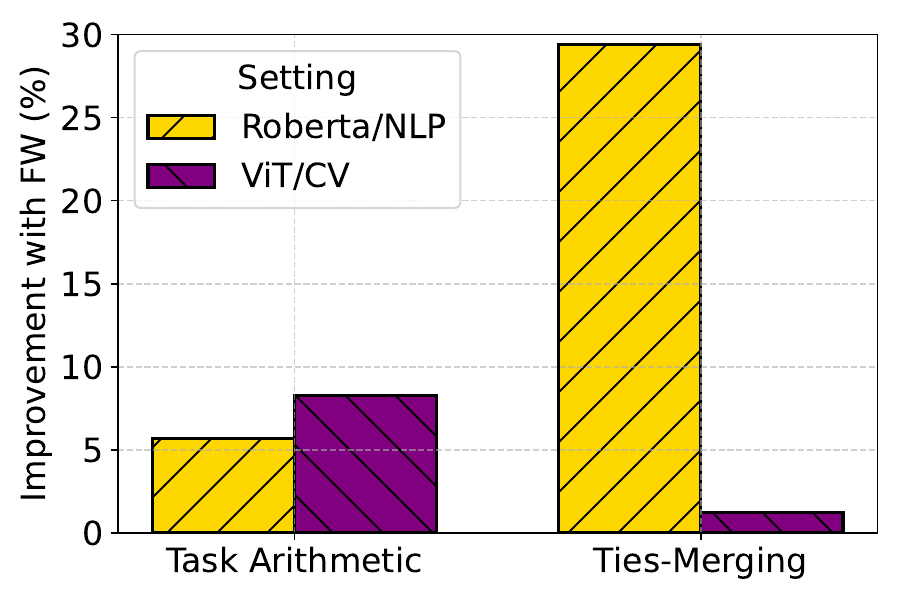}
        \caption{Merging functions
        }
        \label{fig:merge_fn}
    \end{subfigure}
    \caption{Ablation on \fwm. (a) reports accuracies on the vision benchmark, while (b) on vision and language benchmarks.
    }
\end{figure}

\paragraph{Initial solution.}
We examine the effect of initialization on \fwm. An ideal initial solution should either (1) be closer to the global optimum or (2) expand the constraint set with a more meaningful search space. As shown in Figure~\ref{fig:init_point}, initializing \fwm with the Adamerging result improves performance compared to starting from the pre-trained model, likely because Adamerging is closer to the optimal point. In contrast, task arithmetic leads to worse performance than the pre-trained model, potentially due to its poor performance on vision tasks (21.2\%), suggesting it starts further from the optimum. Consequently, more FW iterations are required to achieve convergence.  

\paragraph{Flexibility of merging functions.}
Although only a restricted set of merging functions ensure that \fwm remains within the convex hull, we demonstrate the flexibility of \fwm by showing its ability to enhance alternative merging functions. As shown in Figure~\ref{fig:merge_fn}, applying \fwm with both Task Arithmetic and Ties-Merging improves performance on NLP and vision tasks, even though Ties-Merging does not necessarily stay within the convex hull. This suggests that \fwm remains effective across different merging functions.

\section{Conclusion}
In this work, we extend model merging to a more challenging setting where the merging pool consists of a large number of black-box fine-tuned checkpoints. While existing methods require prior knowledge of model details to achieve optimized performance, our proposed Frank-Wolfe Merging (\fwm) scales effectively with a large number of black-box models, iteratively refining the merged model towards the optimal point defined by an objective function. Experiments demonstrate that \fwm achieves superior performance and scalability, paving the way for next-generation model merging.

\section{Acknowledgement} 

The support of the United Kingdom EPSRC (grant number UKRI256, EP/V028251/1, EP/N031768/1, EP/S030069/1,
and EP/X036006/1), Intel, and AMD is gratefully acknowledged.

{
    \small
    \bibliographystyle{ieeenat_fullname}
    \bibliography{main}
}
\appendix

\section{Data Efficiency}

\begin{figure}[h]
    \centering
    \includegraphics[width=\linewidth]{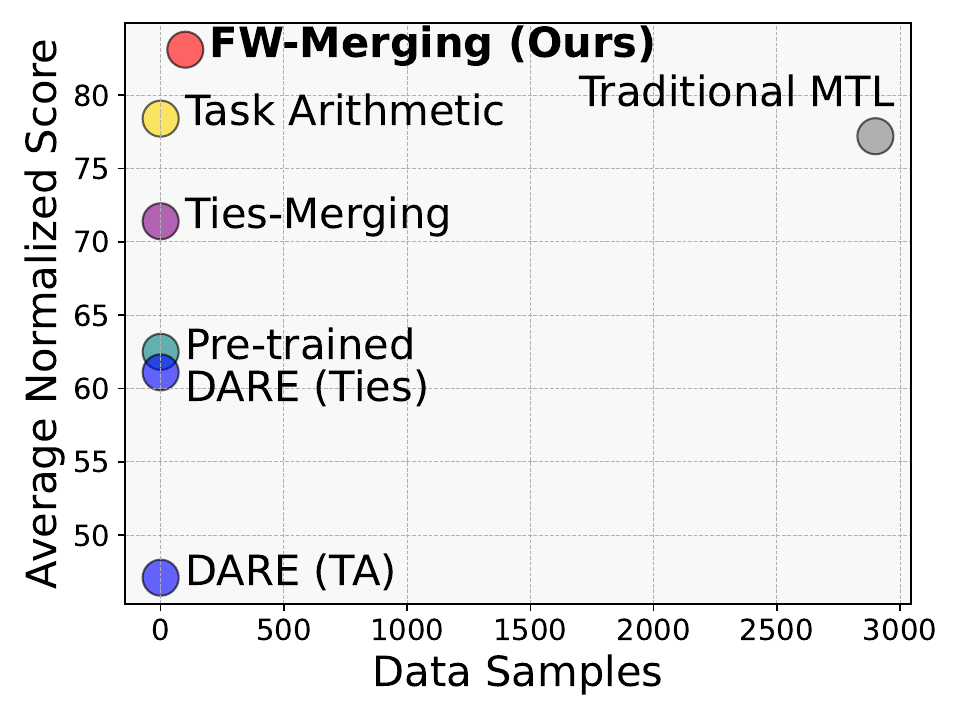}
    \caption{Performance vs. \#Data Samples.}
    \label{fig:scaling_data}
\end{figure}

As illustrated in Figure~\ref{fig:scaling_data}, \fwm outperforms all other model merging methods in terms of performance for the language benchmark. Its performance also surpasses that of traditional MTL while using less training data.

\section{Experiment Details}

\subsection{Benchmarks} \label{app:benchmark}

\paragraph{Discriminative Tasks.} Following previous research~\cite{lu2025twin}, 10\% of the training split is used as validation split, while the original validation set is used as test set. We fine-tuned 8 RoBERTa on 8 tasks form the GLUE benchmark~\cite{wang2018glue}: QNLI, COLA, STS-B, QQP, SST-2, MRPC, MNLI, RTE. For the evaluation benchmark, we use MNLI, QNLI, QQP, and RTE.

\paragraph{Generative Tasks.} We collected the following fine-tuned LLaMA2-7B checkpoints from Hugging Face:

\begin{itemize}
    \item Code Generation\footnote{\url{https://huggingface.co/arnavgrg/codealpaca-qlora}}
    \item Medical QA\footnote{\url{https://huggingface.co/SanjanaR01/medical-dialogue-summary-llama2-7b-peft-qlora}}
    \item News Summarization\footnote{\url{https://huggingface.co/ernlavr/llama2_7bn-xsum-cnn-lora-adapter}}
    \item Commonsense Reasoning\footnote{\url{https://huggingface.co/Styxxxx/llama2_7b_lora-piqa}}
    \item Machine Translation\footnote{\url{https://huggingface.co/Styxxxx/llama2_7b_lora-wmt16_translate_roen}, 
    \url{https://huggingface.co/Styxxxx/llama2_7b_lora-wmt16_translate_csen}, 
    \url{https://huggingface.co/Styxxxx/llama2_7b_lora-wmt16_translate_deen}, 
    \url{https://huggingface.co/Styxxxx/llama2_7b_lora-wmt16_translate_fien}, 
    \url{https://huggingface.co/Styxxxx/llama2_7b_lora-wmt16_translate_ruen}, 
    \url{https://huggingface.co/Styxxxx/llama2_7b_lora-wmt16_translate_tren}}
    \item Natural Language Understanding\footnote{\url{https://huggingface.co/Styxxxx/llama2_7b_lora-wnli}, 
    \url{https://huggingface.co/Styxxxx/llama2_7b_lora-sst2}, 
    \url{https://huggingface.co/Styxxxx/llama2_7b_lora-snli}, 
    \url{https://huggingface.co/Styxxxx/llama2_7b_lora-rte}, 
    \url{https://huggingface.co/Styxxxx/llama2_7b_lora-qnli}, 
    \url{https://huggingface.co/Styxxxx/llama2_7b_lora-cola}}
\end{itemize}

For evaluation, we used the first 1,000 samples from CNN/DM summarization~\cite{cnn}, the full test set of PubMedQA~\cite{jin2019pubmedqa}, and HumanEval~\cite{chen2021codex}. Performance was measured using ROUGE scores for summarization, accuracy for medical QA, and pass@1 accuracy for code generation.

\paragraph{Vision Tasks.} 
We use models fine-tuned on the same 20 tasks as \cite{wang2024localizing}: KMNIST~\cite{kmnist}, EMNIST~\cite{cohen2017emnist}, SVHN~\cite{svhn}, GTSRB~\cite{gtsrb}, FER2013~\cite{fer2013}, DTD~\cite{dtd}, EuroSAT~\cite{helber2019eurosat}, MNIST~\cite{deng2012mnist}, RenderedSST2~\cite{renderedsst2, renderedsst22}, Cars~\cite{cars}, PCAM~\cite{pcam}, RESISC45~\cite{resisc},
FashionMNIST~\cite{xiao2017fashion}, SUN397~\cite{xiao2016sun}, CIFAR100~\cite{cifar10}, Flowers102~\cite{flower102}, Food101~\cite{bossard2014food}, OxfordIIITPet~\cite{OxfordIIITPet}, CIFAR10~\cite{cifar10}, STL10~\cite{stl10}.

\subsection{Baselines} \label{app:baselines}

\begin{itemize}
    \item \textbf{Pre-trained}: Employs a pre-trained model for each task without adapting it to the downstream tasks.
    \item \textbf{Individual}: Fine-tunes distinct models for each task, providing the performance upperbound for task-specific performance.
    \item \textbf{Traditional MTL}: Fine-tunes a single model on all tasks, providing a baseline for multi-task learning.
    \item \textbf{Weight Averaging~\cite{izmailov2018averaging}}: Averages the weights of separately fine-tuned models for different tasks, serving as a simple baseline.
    \item \textbf{Task Arithmetic~\cite{mitchell2022editing}}: Creates a multi-task vector by adding individual task vectors, which are scaled by a coefficient ($\lambda$) and incorporated into the pre-trained model's parameters.
    \item \textbf{Fisher Merging~\cite{fisher}}: Uses the Fisher information matrix to determine the importance of model parameters, preserving crucial parameters for each task.
    \item \textbf{Ties-Merging~\cite{yadav2023ties}}: Merges models by applying techniques like pruning, parameter sign determination, and separate merging to generate a merged task vector ($\tau$), which is added to the original model’s parameters with a scaling factor ($\lambda$) tuned on a validation set.
    \item \textbf{AdaMerging~\cite{yang2023adamerging}}: Adapts merging coefficients at either the task or layer level by minimizing entropy over unlabeled test data, using this as a surrogate objective for model merging.
    \item \textbf{Concrete Merging~\cite{tang2023concrete}}: Utilizes a meta-learning framework to generate a concrete mask that mitigates task interference during the merging process.
    \item \textbf{Representation Surgery~\cite{yang2024representation}}: Aligns the representation of the merged model with those of the individual models while adjusting biases to ensure compatibility across tasks.
\end{itemize}

We used Fusion Bench~\cite{tangFusionBenchComprehensiveBenchmark2024} for evaluation of the vision tasks. We follow the experiment setup provided there. AdaMerging is run with the same setup as detailed in their paper, with a learning rate of 0.001, momentum values of (0.9, 0.999), a batch size of 16, and 500 iterations. Surgery is applied to the merged model from AdaMerging.

\subsection{Implementations} \label{app:impl}

On language benchmarks, with the initial solution being the merged model from task arithmetic, and \fwta is run for 10 iterations. On vision tasks, the initial solution is the merged model from AdaMerging, and \fwta runs for 3 iterations. For vision benchmarks, \fwam is run for 15 iterations with the pre-trained model as the initial solution.

For the discriminative language benchmark, 100 data samples from each of MNLI, QNLI, QQP, and RTE are randomly selected as calibration datasets. For generative language tasks, 100 samples are randomly drawn from the training splits of CNN/DM~\cite{cnn}, CodeAlpaca-20k~\cite{codealpaca}, and PubMedQA~\cite{jin2019pubmedqa}. For vision tasks, training samples are randomly drawn from the training splits of SUN397~\cite{sun2020adashare}, Stanford Cars~\cite{cars}, GTSRB~\cite{gtsrb}, and DTD~\cite{dtd}.

\subsection{Scaling Experiment Setups} \label{app:scaling}

For scaling experiments with irrelevant models, we evaluate performance on SUN397~\cite{sun2020adashare}, Stanford Cars~\cite{cars}, GTSRB~\cite{gtsrb}, and DTD~\cite{dtd}. The irrelevant models consist of the vision models listed in Appendix~\ref{app:benchmark}, excluding those fine-tuned on these four tasks. For scaling experiments with relevant models, we use all 20 vision tasks as evaluation benchmarks, progressively adding the corresponding fine-tuned models to the merging pool. We employ \fwam for these scaling experiments. To ensure a fair comparison, \fwm optimizes the merging coefficients using entropy loss on test samples, similar to Adamerging. Adamerging is run for 300 iterations in experiments with irrelevant models and 200 iterations in those with relevant models.

\end{document}